\documentclass[
reprint,
superscriptaddress,
amsmath,
amssymb,
pra,
floatfix,
twocolumn,
longbibliography
]{revtex4-1}

\usepackage{float}
\usepackage{amsfonts,amsmath,amssymb,mathtools}
\usepackage{braket}
 \usepackage{array,bm,color}
\usepackage{epsfig,graphicx,nomencl,revsymb4-1,upgreek,url}
\usepackage{hyperref}
\usepackage{caption}
\usepackage{subcaption}
\usepackage{graphicx}
\usepackage{calc}
\usepackage{siunitx}
\newcolumntype{Y}{>{\centering\arraybackslash}X}
\graphicspath{{./figures/}}

\newcommand{\poly}{\mathrm{poly}}
\usepackage{amssymb}

\newcommand{\note}[1]{}

\usepackage{amsmath}
\usepackage{amssymb, cancel}
\usepackage{physics} 
\usepackage{mathtools}
\usepackage{hyperref}
\usepackage{exscale}
\usepackage{graphicx,nicefrac} 
\usepackage{color}
\usepackage{stackengine}
\usepackage{txfonts}
\usepackage{amsmath}
\usepackage{algorithm}
\usepackage{mathtools}
\usepackage{hyperref}
\hypersetup{colorlinks=true, citecolor=magenta, linkcolor=magenta, urlcolor=magenta}
\usepackage{footnote}
\usepackage{graphicx}
\usepackage{subcaption}
\usepackage{color}
\usepackage{blkarray}
\usepackage{dsfont}
\usepackage{mathrsfs}
\usepackage{bbold}
\usepackage{bbm}

\newcommand{\Qxit}{\tilde{\textbf{Q}}_{\xi}}

\newcommand{\II}{\textbf{I}}
\newcommand{\XX}{\boldsymbol{X}}

\newcommand{\EX}{\langle \boldsymbol{X} \rangle}

\newcommand{\IPC}{{\rm IPC}}

\newcommand{\DD}{\boldsymbol{D}}
\newcommand{\VV}{\boldsymbol{V}}

\usepackage{tikz}

\graphicspath{{./figures/}}

\usepackage{amsthm}
\usepackage{tikz}
\usepackage{amsmath}
\captionsetup{justification=raggedright,singlelinecheck=false}
\usepackage[format=hang, 
justification=raggedright]{caption}
\usepackage{cleveref}
\def\vec{\mathaccent "017E\relax }
\usepackage{microtype}
\usepackage{graphicx}
\usepackage{booktabs} 
\usepackage{hyperref}
\usepackage{amsmath}
\usepackage{amssymb}
\usepackage{mathtools}

\theoremstyle{plain}
\newtheorem{theorem}{Theorem}[section]

\newtheorem{lemma}[theorem]{Lemma}
\newtheorem{corollary}[theorem]{Corollary}
\theoremstyle{definition}
\newtheorem{definition}[theorem]{Definition}

\theoremstyle{remark}

\usepackage[textsize=tiny]{todonotes}

\begin{document}

\author{Anthony M. Polloreno}
\affiliation{JILA, NIST and Department of Physics, University of Colorado, 440 UCB, Boulder, Colorado 80309, USA}
\email[Email: ]{ampolloreno@gmail.com}
\date{\today}
\title{Restrictions on Physical Stochastic Reservoir Computers}

\begin{abstract}
Reservoir computation is a recurrent framework for learning and predicting time series data, that benefits from extremely simple training and interpretability, often as the the dynamics of a physical system. In this paper, we will study the impact of noise on the learning capabilities of analog reservoir computers. Recent work on reservoir computation has shown that the information processing capacity (IPC) is a useful metric for quantifying the degradation of the performance due to noise. We further this analysis and demonstrate that this degradation of the IPC limits the possible features that can be meaningfully constructed in an analog reservoir computing setting. We borrow a result from quantum complexity theory that relates the circuit model of computation to a continuous time model, and demonstrate an exponential reduction in the accessible volume of reservoir configurations. We conclude by relating this degradation in the IPC to the fat-shattering dimension of a family of functions describing the reservoir dynamics, which allows us to express our result in terms of a classification task. We conclude that any physical, analog reservoir computer that is exposed to noise can only be used to perform a polynomial amount of learning, despite the exponentially large latent space, even with an exponential amount of post-processing.
\end{abstract}

\maketitle

\section{Introduction}
Reservoir computers \cite{2021, Tanaka_2019} are a particular kind of recurrent neural network where the only trained parameters are outgoing weights forming a linear layer between the internal parameters of the network, called the reservoir, and the readouts. This architecture drastically simplifies the process of training the network while maintaining high computational power. A defining aspect of a reservoir computer is its ability to perform inherently temporal tasks, such as time-series prediction or pattern recognition within sequences \cite{dominey1995complex}. In this framework, the reservoir serves as a temporal kernel \cite{dong2020reservoir}, transforming the input sequence into a high-dimensional state represented in the hidden state of the reservoir. These hidden states are then linearly combined by trainable output nodes. The high-dimensional, temporal representation of data in the reservoir gives the output nodes enough flexibility to extract complex, non-linear features while avoiding the common issue of vanishing/exploding gradients or overfitting found in other recurrent architectures. 

Notably, due to the simple linear nature of the learned part of the network, the behavior and outcomes of reservoir computers are defined primarily by the reservoir. This interpretability sets reservoir computers apart from other learning approaches, making them valuable for applications requiring transparency or insight into the learning process. A crucial advantage is that the non-linearities needed for learning are encapsulated within the physical dynamics of the reservoir, suggesting that we can leverage physical understanding to glean insights into the learning. In this paper, we explore the impact of noise on the performance of reservoir computers.

We will assume that the stochasticity of the reservoir comes from e.g., dissipation, incomplete information or measurement shot noise. In addition, we will assume the length of time the reservoir is run and the amount of energy the reservoir consumes are both polynomial in the system size $n$. This constrains the kinds of allowed operations - for instance, it is not possible to perform $n$-body operations (arbitrary long range interactions) in a single time step, since these require an exponential number of $k$-body ($k < n$) operations in general. In our arguments we assume that the dynamical system corresponds to a digital computer, and hence $k-$body operations correspond simply to $k-$bit logic gates.

Our first result, \Cref{theorem:generateddesign}, will demonstrate that the dimension of the space we can perform regression onto can be meaningfully identified as polynomial, despite the exponential number of functions being considered. In particular, we will show that a large fraction of the signals produced by a reservoir are strongly dominated by noise, and that it requires a lot of waiting to distinguish one such signal from another. Our second result, \Cref{theorem:nopoly} will give a characterization of the class of functions represented by the reservoir. Namely, due to a theorem of Bartlett et al.\cite{bartlett1994fat}, we will see that is possible to relate a certain kind of learnability of this class to its fat-shattering dimension \cite{bartlett1994fat}. 

\section{Background}

Prior work \cite{boedecker2012information,baranvcok2014memory} on the information processing and memory capacity of reservoir computers often considers information theoretic measurements to assess their performance, or considers quantities such as spectral radii \cite{alexandre2009reservoir}, Lyapunov exponents \cite{chen2022time} and their connection to other dynamical systems and Koopman operators \cite{gulina2021two}. A typical family of networks considered in reservoir computing are so-called echo state networks \cite{jaeger2001echo}, which are deterministic recurrent reservoirs. By considering deterministic networks that are implemented with physical reservoirs that may be subject to noise in the physical degrees of freedom, such as uncertainty due to fluctuations in measurement voltages or thermal fluctuations, we are naturally led to consider stochastic reservoir computers. 

Recent theoretical analyses \cite{polloreno2023note, hu2023tackling} have studied in detail the impact of such noise on the IPC and in this work we also choose to use the IPC due to its simple linear algebraic interpretation. In particular, reservoir computing is an efficient approach for learning with dynamical systems, and this work aims to provide insights into this class of methods. While the IPC is not a standard measure for information processing capabilities of artificial neural networks, it provides an effective way of characterizing the maximum information processing ability of a network and, as we will see, can also be used to measure the degradation of performance due to stochasticity. The IPC has the additional benefit of being interpretable as the dimension of the hypothesis space associated with reservoir computer.

In our discussion we rely heavily on asymptotic notation, and so we remind the reader here of a few definitions. \begin{enumerate}
\item $f(n)= o(g(n))$ (Little-o): This is used when $f(n)$ grows strictly slower than $g(n)$ as $n$ approaches infinity. Formally, we say $f(n)$ is $o(g(n))$ if for any positive constant $k > 0$, there exists a value $n_0$ such that for all $n > n_0$, it holds that $f(n) < k \cdot g(n)$. In other words, no matter how small a constant we choose, $f(n)$ will eventually be surpassed by $k \cdot g(n)$.

\item $f(n)= O(g(n))$ (Big-O): This is used when $f(n)$ grows at the same rate or slower than $g(n)$ as $n$ approaches infinity. Formally, $f(n)$ is $O(g(n))$ if there exist constants $k$ and $n_0$ such that for all $n > n_0$, it holds that $f(n) \leq k \cdot g(n)$. In other words, we can find an upper bound on the growth rate of $f(n)$.

\item $f(n)= \omega(g(n))$ (Little-$\omega$): This is used when $f(n)$ grows strictly faster than $g(n)$ as $n$ approaches infinity. Formally, $f(n)$ is $\omega(g(n))$ if for any positive constant $k > 0$, there exists a value $n_0$ such that for all $n > n_0$, it holds that $f(n) > k \cdot g(n)$. This is like being strictly greater ``in the limit''.

\item $f(n)= \Omega(g(n))$ (Big-$\Omega$): This is used when $f(n)$ grows at the same rate or faster than $g(n)$ as $n$ approaches infinity. Formally, $f(n)$ is $\Omega(g(n))$ if there exist constants $k$ and $n_0$ such that for all $n > n_0$, it holds that $f(n) \geq k \cdot g(n)$. This is saying $f(n)$ has a growth rate that is at least the rate of $g(n)$.
\end{enumerate} 

Put simply, $o$ and $O$ notations provide upper limits (strict and non-strict respectively) on the growth of a function, while $\omega$ and $\Omega$ provide lower limits (strict and non-strict respectively) on the growth of a function. They are effectively bounding the function's growth from below or above. It is worth noting that $f(n) = o(g(n))$ is equivalent to $g(n) = \omega(f(n))$ and $f(n) = O(g(n))$ is equivalent to $g(n) = \Omega(f(n))$.

\section{Reservoir Computers}
A reservoir computer is  a dynamical system, generally described by a system of differential equations, driven by an input $\boldsymbol{U}(t) \in \mathbb{R}^m$ and described by time varying degrees of freedom $\XX(t) \in \mathbb{R}^n$ which represent its state. In this paper, we will consider systems consisting of $n$ degrees of freedom, which we will assume are bits. We note that this assumption is not particularly restrictive, and is in fact commonplace - any kind of physical dynamics can be encoded with vanishing error into a discretized signal. Because these degrees of freedom can be multiplied together to form new outputs, $\boldsymbol{Y}(t) \in \mathbb{R}^{d}$, in general the output dimension is $d\geq n$ and in this work we will consider $d=2^n$ corresponding to all possible products of the $n$ signals.

In the case of a stochastic reservoir computer, the state of the computer is in general a vector in the $2^n$ dimensional space of probability distributions over bitstrings of length $n$, i.e.
$\boldsymbol{Y}(t) = (p_{0...0}(t), ..., p_{1...1}(t))$. While it may be compelling to assume that the $n$ single-bit marginal distributions will contain the most computational utility, we note that arbitrary reservoirs can be used to construct complex and potentially highly correlated probability distributions. Thus, any argument that suggests there is a preferred collection of bitstrings among the $2^n$ possible ones is making further assumptions about the structure of the reservoir.

Typically, in a reservoir computing setting, one considers a dynamical system observed at discrete time-steps $t = 0, 1, 2, \ldots$, and the outputs are used to approximate a target function. Due to the general presence of memory in dynamical systems, we additionally define the concatenated $h$-step sequence of recent inputs $\boldsymbol{U}^{-h}(t) = [\boldsymbol{U}(t - h + 1), \boldsymbol{U}(t - h + 2), \ldots, \boldsymbol{U}(t)]$. While we may use the reservoir to learn a function of time, the reservoir's degrees of freedom themselves can be approximated by maps $x_k^h:\boldsymbol{U}^{-h}(t)\mapsto\mathbb{R}$. In particular, this is because we require that the reservoir satisfies the fading memory property. A dynamical system has fading memory if, for all $\epsilon > 0$, there exists a positive integer $h_0\in\mathbb{N}$, such that for all $h>h_0$, for all initial conditions, and for all sufficiently long initialization times $T'>h$, the $x_k(t)$ at any time $t \geq 0$ are well-approximated by functions $x_k^h$:
\begin{equation}\label{eq:fadingmem}
\mathbb{E}((x_k(t) - x_k^h[\boldsymbol{U}^{-h}(t)])^2) < \epsilon
\end{equation}
where the expectation is taken over the $t + T'$ previous inputs. 
Due to two different sources of randomness in this paper, we next give definitions and notation for the different averages we compute, before we define the capacity of reservoir to reconstruct a signal.
\begin{definition}\label{def:averages}
The input signal to a reservoir computer corresponds to a stochastic random variable and thus has an associated probability measure $\mu$, and we write $\overline{f}$ to denote averages of a quantity $f$ with respect to this measure, i.e.
\begin{equation}
    \overline{f} = \int d\mu f.
\end{equation}
A stochastic reservoir computer \cite{ehlers2025stochastic} further has probabilistic dynamics, coming from, e.g., noise. Thus the outputs have an additional associated probability measure $\nu$, and we write $\langle f \rangle$ to denote the average of a quantity $f$ over this measure, i.e.
\begin{equation}\label{eq:stochmeas}
    \langle f \rangle = \int d\nu f.
\end{equation}
\end{definition}
\begin{definition}\label{def:capacity}
\cite{dambre2012information} The capacity of a reservoir to reconstruct a signal $y(t)$ is given as
\begin{align}
    C_T[y] = 1 -  {\min}_{\omega}\frac{ \sum_{t=1}^T(\hat{y}_{\omega}(t) - y(t))^2  }{ \sum_{t=1}^T y^2(t) },
\end{align}
where $\hat{y}_{\omega}$ is the estimate of $y$ produced from the reservoir . 
\end{definition}
Typically $\hat{y}_{\omega}$ is produced via a linear weighting of the output signals, i.e. $\hat{y}(t) = \boldsymbol{\omega}^T\boldsymbol{Y}(t)$ for a weight vector $\boldsymbol{\omega}$. Then, we can define the IPC as
\begin{definition}\label{def:ipc}
\cite{dambre2012information} For a complete and countably infinite set of orthogonal basis functions $\{y_1, y_2, \ldots \}$, the IPC of a dynamical system can be defined as
\begin{align}\label{eq:ipc}
    {\rm IPC} = \lim_{D \to \infty} \lim_{T \to \infty} \sum_{\ell}^{D} C_T[y_{\ell}] \leq n.
\end{align}
\end{definition}

References \cite{hu2023tackling} and \cite{polloreno2023note} derive a closed form expression for the IPC of a stochastic reservoir, which we state here without derivation as \Cref{theorem:ipcdef}. Writing $\EX$ as short hand for $\boldsymbol{X}(t)$, we can perform a spectral decomposition $\overline{\EX \EX^T} = \VV \DD \VV^T$ with positive definite, diagonal matrix $\DD$ and an orthogonal matrix $\VV$ to define the generalized noise-to-signal matrix $\Qxit$ as
\begin{align}
     \II + \Qxit  \coloneqq \DD^{-\frac12} \VV^T\overline{\langle \XX \XX^T\rangle}\VV \DD^{-\frac12},
\end{align}
where $\II$ is the identity matrix, so that $\Qxit$ describes the deviation of the reservoir from an ideal noiseless reservoir that produces orthogonal outputs. 

\begin{definition}{\cite{hu2023tackling}}
The right eigenvectors of $\Qxit$ are called the eigentasks of the reservoir.
\end{definition}

In \cite{polloreno2023note} it is shown that $\Qxit$ gives some measure of the reservoir stochasticity in the basis of the correlations imposed by the input signal and gets its utility from the following theorem:
\begin{theorem}{\rm\cite{hu2023tackling, polloreno2023note}}\label{theorem:ipcdef}
The IPC of a stochastic reservoir is given as

    \begin{equation}
    \begin{split}
    {\rm IPC}  
    & =  
    \Tr( (\II + \Qxit)^{-1}) \\
    & = 
    \sum_{k=1}^n \frac{1}{1 + \tilde{\sigma}_k^2} \le n,
        \end{split}
    \end{equation}

\end{theorem}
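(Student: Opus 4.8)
\emph{Proof proposal.} The plan is threefold: solve the inner linear least–squares problem of \Cref{def:capacity} in closed form and take $T\to\infty$; sum the resulting capacities over a complete basis, collapsing the sum with Parseval's identity and the fading–memory property; and substitute the spectral data defining $\Qxit$ to recognize the answer as $\Tr\big((\II+\Qxit)^{-1}\big)$, which is then bounded. For the first part: since $\hat y_{\boldsymbol\omega}(t)=\boldsymbol\omega^T\XX(t)$, the minimization over $\boldsymbol\omega$ is ordinary least squares, and the normal equations express $C_T[y]$ as a ratio of quadratic forms in the empirical moments $\frac1T\sum_t\XX(t)\XX(t)^T$, $\frac1T\sum_t\XX(t)y(t)$ and $\frac1T\sum_t y^2(t)$. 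Letting $T\to\infty$ and invoking an ergodic/law–of–large–numbers argument — valid under \eqref{eq:fadingmem} and the modeling assumption that the reservoir draws fresh noise each step — replaces these by the joint averages of \Cref{def:averages}. The one subtlety is that $\frac1T\sum_t\XX(t)\XX(t)^T\to\overline{\langle\XX\XX^T\rangle}$, whereas in the cross–correlation $\frac1T\sum_t\XX(t)y(t)\to\overline{\EX\,y}$ the $\nu$–average reduces $\XX$ to $\EX$, since the target $y$ depends on the input alone and is therefore uncorrelated with the reservoir noise. This gives
\begin{equation}
C[y]\;\coloneqq\;\lim_{T\to\infty}C_T[y]\;=\;\frac{\overline{\EX\,y}^{\,T}\;\overline{\langle\XX\XX^T\rangle}^{-1}\;\overline{\EX\,y}}{\overline{y^2}}.
\end{equation}

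For the second part, orthonormalize the basis so that $\overline{y_\ell y_{\ell'}}=\delta_{\ell\ell'}$. Then $\sum_{\ell=1}^{D}C[y_\ell]=\Tr\big(\overline{\langle\XX\XX^T\rangle}^{-1}\sum_{\ell=1}^{D}\overline{\EX\,y_\ell}\,\overline{\EX\,y_\ell}^{\,T}\big)$, and applying Parseval's identity in $L^2(\mu)$ componentwise to the scalar functions $\langle X_i\rangle$ — which lie in the $L^2(\mu)$–closure of the span of $\{y_\ell\}$ precisely because the reservoir has fading memory — yields $\lim_{D\to\infty}\sum_{\ell=1}^{D}\overline{\EX\,y_\ell}\,\overline{\EX\,y_\ell}^{\,T}=\overline{\EX\EX^T}$. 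Hence $\IPC=\Tr\big(\overline{\langle\XX\XX^T\rangle}^{-1}\overline{\EX\EX^T}\big)$.

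For the third part, substitute $\overline{\EX\EX^T}=\VV\DD\VV^T$ together with $\overline{\langle\XX\XX^T\rangle}=\VV\DD^{1/2}(\II+\Qxit)\DD^{1/2}\VV^T$ (a rearrangement of the defining relation for $\Qxit$) and use cyclicity of the trace with $\VV^T\VV=\II$; the product collapses to $\Tr\big((\II+\Qxit)^{-1}\big)$. Finally, $\overline{\langle\XX\XX^T\rangle}-\overline{\EX\EX^T}=\overline{\langle\DX\DX^T\rangle}$ with $\DX=\XX-\EX$ is a genuine covariance matrix, hence positive semidefinite, so the congruent matrix $\Qxit$ is positive semidefinite as well; writing its eigenvalues as $\tilde\sigma_k^2\ge 0$ gives $\IPC=\sum_{k=1}^n 1/(1+\tilde\sigma_k^2)\le n$, with equality exactly for a noiseless reservoir.

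\emph{Main obstacle.} The steps above are bookkeeping with the normal equations and the trace; the genuine difficulty lies in the two limit interchanges. First, one must justify that single–trajectory time averages converge to the stated joint expectations — an ergodic–theorem and uniform–integrability question, and the point at which the polynomial–runtime and i.i.d.–noise modeling assumptions actually enter. Second, one must exchange $\lim_{D\to\infty}$ with $\lim_{T\to\infty}$ so that the componentwise Parseval step is legitimate, which rests essentially on \eqref{eq:fadingmem}. One also needs $\DD$ (equivalently $\overline{\EX\EX^T}$) nonsingular, or else to work on its range throughout.
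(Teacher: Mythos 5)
Your derivation is correct, but note that the paper itself offers no proof of this theorem: it is explicitly stated ``without derivation'' and imported from the cited references, so there is no in-paper argument to compare against. Your three-step route --- closed-form least squares, Parseval over the complete input-function basis to get $\IPC=\Tr\bigl(\overline{\langle\XX\XX^T\rangle}^{-1}\overline{\EX\EX^T}\bigr)$, then the congruence by $\VV\DD^{1/2}$ to reach $\Tr\bigl((\II+\Qxit)^{-1}\bigr)$ with $\Qxit\succeq 0$ --- is exactly the standard derivation in those references, and it is also consistent with the paper's own Lemma~\ref{lem:probrep}, which performs the analogous trace manipulation in the probability representation. The caveats you flag (ergodic convergence of the empirical moments, the $\lim_D$/$\lim_T$ interchange, and nonsingularity of $\overline{\EX\EX^T}$) are real and are likewise left implicit in the paper.
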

\noindent where $\tilde{\sigma}_k^2$ are the eigenvalues of $\Qxit$. These eigenvalues correspond to noise-to-signal ratios, the inverse of SNRs, of the reservoir at performing their respective eigentasks. The reader may notice that this takes a similar form to the least squares solution to the linear regression problem with uncertainties on both the independent and dependent variables, and can be shown to come from similar considerations \cite{hu2023tackling}. The outputs of the reservoir are, in general, post-processed depending on the learning task at hand. Because we conventionally optimize over linear weights, we are free to define the outputs of the reservoir up to a linear transformation without impacting the IPC. In particular, we will find particularly convenient the probability representation of the reservoir outputs.
\begin{definition}\label{def:probrep}
The probability representation of the outputs of a reservoir is given by the bitstring probabilities $p_k$, i.e. the output signal is given by $X(t) = (p_{0...0}(t), \ldots, p_{1...1}(t))$.
\end{definition}

We note that, in our definition, we explicitly use all possible products (monomials) of the reservoir signals to define the output vector $Y(t)$, rather than directly using the raw signals themselves. This choice is motivated by three complementary considerations. First, the set of all possible products of binary signals forms a natural and complete basis for functions defined on binary spaces, making this construction standard in classical Boolean circuit theory. Second, in quantum computing, physical readouts and observables are inherently represented as monomials of measurement operators, such as products of Pauli operators. This directly aligns our theoretical framework with practical quantum computing implementations, where these monomials naturally represent computational observables. Third, by explicitly including these nonlinear product outputs, we establish a comprehensive theoretical approach that clearly delineates the intrinsic computational capabilities of the reservoir itself from nonlinear post-processing operations performed externally. While simpler or alternative nonlinear transformations, such as the Hilbert transform used in spintronic oscillator reservoirs \cite{tsunegi2023information}, are often practical, our general and explicit choice facilitates rigorous theoretical analysis of computational constraints and capabilities.

Our results in this paper will be similar in spirit to the results in \cite{poulin2011quantum, Shannon1949}, which show that the space of states accessible by a physical computer, defined next, are exponentially vanishing in the total state space. Conceptually, these results suggest our ultimate result - how can physical states give rise to signals that have useful support on all $2^n$ basis vectors if the states themselves are exponentially vanishing?
\begin{definition}\label{def:physical}
    We define a stochastic reservoir as physical if, motivated by \cite{poulin2011quantum}, its dynamics can be simulated by quantum or classical circuits in a time that scales polynomially with the system size $n$. More precisely, let a $k$-body circuit element be defined as a circuit component acting on at most $k$ inputs. We require that the dynamics of such a reservoir be implementable by circuits composed of $k$-body elements, with $k > 0$ being a fixed integer independent of $n$.
\end{definition}
The term physical emphasizes computational tractability and physical realizability, indicating that such reservoirs do not exhibit complexity scaling exponentially with system size. The stochastic aspect of the reservoir may arise either from observational (measurement) noise, dynamical noise inherent in the reservoir's evolution, or both, depending on the specific physical implementation considered. Specifically, observation noise can be trivially included as dynamical noise that occurs at the observation frequency, and dynamical noise can be associated with a measure as in \Cref{eq:stochmeas}. Because for our purpose, it suffices to say that there exists some measure associated with some stochastic process, we leave the detailed study of this to future work.

Intuitively, \Cref{def:physical} rules out states that are not practically accessible to the reservoir. The $k$-body requirement stems from physicality constraints on the density of circuit elements. There are only so many circuit elements that can fit into a space and increasing this density arbitrarily with $n$, i.e. $n-$bit operations, is physically impossible. It is of course possible to compile arbitrary $n-$body terms into $k-$body terms, but this in general will require an exponential number of gates, and thus exponential time. For the purposes of this paper we borrow the definition from \cite{poulin2011quantum} and provide a more detailed justification in \Cref{app:physical}. In particular, we additionally prove a lemma that we will make later use of.
\begin{lemma}\label{lemma:prob_changes}
Consider the magnitude of the input to an $n-$bit reservoir computer, given as $u(t) = ||\boldsymbol{U}(t)||_2$. Then, the changes in probabilities $\frac{dp(u)}{du}$ are $O(\poly(n))$. 
\end{lemma}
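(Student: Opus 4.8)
\emph{Proof strategy.} The plan is to represent the reservoir's input-to-state evolution as a circuit built from polynomially many bounded-size stochastic gates, and then to bound the derivative of the output bitstring distribution by applying the product rule to this composition, exploiting the fact that every stochastic map has unit $\ell_1$ operator norm, so that all but one factor in each term of the product rule drops out of the estimate.

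Concretely, I would proceed as follows. First, invoke the physicality assumption: the map from the input to the reservoir state is realized by a polynomial-time (hence polynomial-size) randomized computation, i.e.\ a circuit of $G = \mathrm{poly}(n)$ stochastic gates, each a $k$-body circuit element with $k$ fixed and independent of $n$; I would also absorb the injection of $\boldsymbol{U}(t)$ itself into these gates, so that all dependence on $u = \|\boldsymbol{U}(t)\|_2$ sits in the gates while the initial distribution $p_0$ is fixed with $\|p_0\|_1 = 1$. Writing the gates in topological order as column-stochastic matrices $M_1(u),\dots,M_G(u)$ on the $2^n$-dimensional space of distributions over length-$n$ bitstrings, each of the form $A_i(u) \otimes \mathds{1}$ with $A_i$ a $2^k \times 2^k$ stochastic matrix on the gate's $k$ wires, the output distribution is $p(u) = M_G(u)\cdots M_1(u)\,p_0$. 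Differentiating, $\tfrac{dp}{du} = \sum_{i=1}^G M_G(u)\cdots M_{i+1}(u)\,\tfrac{dM_i}{du}\,M_{i-1}(u)\cdots M_1(u)\,p_0$; since each column-stochastic $M_i$ has $\ell_1\to\ell_1$ operator norm equal to one and $\|p_0\|_1=1$, this yields $\bigl\|\tfrac{dp}{du}\bigr\|_1 \le \sum_{i=1}^G \bigl\|\tfrac{dM_i}{du}\bigr\|_{1\to 1}$. The $\ell_1\to\ell_1$ norm is the largest absolute column sum, and $\tfrac{dM_i}{du} = \tfrac{dA_i}{du}\otimes\mathds{1}$, so each summand is at most $2^k$ times the largest rate of change of a single $k$-body gate's transition probabilities; because such a gate is a fixed object not scaling with $n$, this is $O(1)$, giving $\bigl\|\tfrac{dp}{du}\bigr\|_1 \le G\cdot O(1) = \mathrm{poly}(n)$. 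Since $\bigl|\tfrac{dp_j}{du}\bigr| \le \bigl\|\tfrac{dp}{du}\bigr\|_1$ for every bitstring $j$, the claim follows.

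The step I expect to be the main obstacle is the last one: making precise, rather than merely asserting, that a single $k$-body circuit element responds to the input with a Lipschitz constant independent of $n$ — that is, that its $2^k$ transition probabilities are smooth in $u$ with $O(1)$ derivative over the relevant range of inputs. I would argue this is essentially forced by the definition of physicality, since a $k$-body gate is by fiat a bounded-complexity device whose description does not grow with $n$, reinforced by the standing assumption that the reservoir consumes only polynomial energy, which caps how hard the input can drive any one gate. A secondary point requiring care is the bookkeeping for how the $m$-component input and its scalar magnitude $u$ enter the circuit, but since we may route $\boldsymbol{U}(t)$ through at most $\mathrm{poly}(n)$ additional $k$-body gates, this changes $G$ only by a polynomial factor and leaves the conclusion intact.
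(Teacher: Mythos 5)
Your proof is essentially correct, but it takes a genuinely different route from the paper's. The paper argues through quantum mechanics: it maps the stochastic circuit onto a pair of unitary circuits via $U_{1,2}(t)=e^{\mp i\arccos(\sqrt{p(t)})X}$ and Roth's lemma, invokes the result of Poulin et al.\ that physically realizable Hamiltonian dynamics (with $\norm{H}$ polynomial in $n$ and $t$) can only change amplitudes polynomially fast, and then transfers this to the probabilities through the chain rule $\frac{d\alpha}{dt}\sim\frac{1}{2\sqrt{p}}\frac{dp}{dt}$. Your argument is purely classical: decompose the evolution into $G=\mathrm{poly}(n)$ column-stochastic $k$-body gates, apply the product rule, and use the fact that stochastic matrices are $\ell_1$-contractions so that only $\sum_i\norm{dM_i/du}_{1\to1}\le G\cdot 2^k\cdot O(1)$ survives. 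This is precisely the kind of ``purely classical'' Markov-chain argument the appendix itself says would be ``simpler and more elucidating,'' and it has the additional advantage of avoiding the $1/\sqrt{p}$ singularity at $p\to 0$ that makes the paper's chain-rule step delicate, while producing an explicit quantitative bound. What the paper's route buys is that the per-element rate bound is inherited from an established physical premise (bounded Hamiltonian norm), whereas you must posit it directly. Note that both proofs ultimately rest on the same unproven microscopic assumption --- that a single bounded-size element responds to the drive at a rate not growing with $n$ --- which the paper's definition of ``physical'' does not strictly entail; you correctly flag this as the main obstacle, and your appeal to the polynomial-energy assumption is no weaker than the paper's appeal to $\norm{H}=\mathrm{poly}(n,t)$. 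One small omission: the lemma is a statement about growth in $n$, and the paper additionally argues that the input magnitude $u$ is itself polynomial in $n$ for a physical family of reservoirs; you should make explicit that your per-gate $O(1)$ Lipschitz bound is uniform over the (polynomially bounded) range of $u$.
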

\begin{proof}
    In \Cref{app:physical} we argue from typical physical arguments that the changes in the probabilities $p(u)$ can only be changed polynomially rapidly in $u$. Furthermore, we argue in \Cref{app:physical} that by construction of our definition of \textit{physical}, we have only considered systems where $u$ is a polynomial in the system size $n$.
\end{proof}

Finally, we comment on the memory of the system. In our definition of a reservoir computer, we have required that the system have finite memory. Noise can serve as one mechanism to limit the memory of the system, since heuristically a system with an effective error rate of $p$ will experience an error every $1/p$ time steps. In the next section we will consider an integral over the effective input space, defined implicitly by the measure $\mu$ and which depends in turn on the memory of the system. In this work, we assume that the memory of the system is $O(\poly(n))$. This rules out, for example, error-corrected digital reservoirs that can simply remember a super polynomial number of inputs and use these to configure a super polynomial number of output signals.        

\section{Stochastic Reservoirs have Subexponential Capacity}\label{sec:subexp_cap}
Generally speaking, the computational utility of a reservoir computer is fully characterized by the dimension of its externally observable dynamics. In the case of a deterministic reservoir, the state space is fully specified by $n$ bits. It is possible to further construct all $2^n$ functions on these bits, which then give the potential for $2^n$ capacity arising from correlations between the bits. (These new signals can of course fail to give additional IPC, for example consider a reservoir with outputs $f_1, f_2$ and $f_1f_2$, with $\int d\mu(u) f_1f_2 = 0$, where we have considered the standard $L^2$ inner product.) For example, the collection of polynomials $S$, given by $S = \{x, x^2, x^4, x^8...x^{2^n}\}$, where all $2^n$ elements of the powerset $2^S = \{\{\},\{x\}, \{x^2\},..., \{x,x^2\}...\}\}$ can be used to construct a collection $S'$ of exponentially many linearly independent polynomials by through multiplication, i.e. $S' = \{x, x^2, x^3..., x^{2^n}\}$.

As previously discussed, in the case of a stochastic reservoir, the state space is immediately naturally defined as $d=2^n$ dimensional. However, despite the system requiring $2^n$ real numbers to be described, a natural question is if it is possible to utilize this $2^n$ dimensional space for useful computation, and in our case, learning? In particular we are able to construct $2^n$ signals by taking multiplicative products of the $n$ output signals  - does this provide an exponential amount of IPC as in the deterministic case? We will find the answer is no, for any physical stochastic reservoir. As we will see, by introducing stochasticity, the IPC of physical stochastic reservoirs is at most a polynomial in the number of output bits, even when considering all $2^n$ readout monomials (which form the conventional ``state space'' of the system). This makes it particularly important to be able to meaningfully select the ``best'' outputs, which requires some understanding of where the information is encoded. We will leave this problem to future work. In this section, we write only $p_k(u)$ to refer to the probability of bitstring $k$ at timestep $t$ when being driven by input $\boldsymbol{U}(t)$, however the reader should be aware that because the reservoir has memory, it would be more appropriate to write $p_k(u) = p_k(\boldsymbol{U}^{-a}(t))$ for some $a$ corresponding to the reservoir's effective memory, e.g. in \Cref{eq:fadingmem} $a=h_0$. We start with a lemma.
\begin{lemma}\label{lem:probrep}
{\rm\cite{2301.00042}}
The IPC in the probability representation is given as 
\begin{equation}
    \IPC  = \sum \frac{1}{1+\tilde{\sigma}_k^2} = Tr\left(\Delta(\overline{\langle \XX\rangle})^{-1}\overline{\langle\XX\rangle\langle\XX^T\rangle}\right),
\end{equation}
where $\Delta: \mathbb{R}^d \rightarrow \mathbb{R}^{(d,d)}$ maps a vector to the diagonal matrix with entries given by the vector, i.e.
\begin{equation}
    \Delta(\vec{a}) = \begin{pmatrix}
a_1 & 0 & \cdots & 0 \\
0 & a_2 & \cdots & 0 \\
\vdots  & \vdots  & \ddots & \vdots  \\
0 & 0 & \cdots & a_{d} 
\end{pmatrix}
\end{equation}
\end{lemma}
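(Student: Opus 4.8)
The plan is to derive the identity directly from the closed form in \Cref{theorem:ipcdef}, by first rewriting that trace in a coordinate‑free way and then specializing to the probability representation. Write $M \coloneqq \overline{\langle\XX\XX^T\rangle}$ and use the spectral decomposition $\overline{\langle\XX\rangle\langle\XX^T\rangle} = \VV\DD\VV^T$ with $\VV$ orthogonal and $\DD$ positive definite. The defining relation $\II + \Qxit = \DD^{-\frac12}\VV^T M\VV\DD^{-\frac12}$ inverts (using $\VV^{-1}=\VV^T$) to $(\II+\Qxit)^{-1} = \DD^{\frac12}\VV^T M^{-1}\VV\DD^{\frac12}$, and cyclicity of the trace then gives
\[
\IPC = \Tr\big((\II+\Qxit)^{-1}\big) = \Tr\big(\VV\DD\VV^T M^{-1}\big) = \Tr\big(\overline{\langle\XX\rangle\langle\XX^T\rangle}\,M^{-1}\big),
\]
a form valid in any choice of output coordinates.

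Next I would pass to the probability representation of \Cref{def:probrep}. Since the IPC is invariant under invertible linear transformations of the output vector (as noted in the paragraph preceding \Cref{def:probrep}), and the $2^n$ one‑hot bitstring indicators and the $2^n$ monomials in the $n$ output bits span the same space and are related by an invertible linear map, the trace above may be evaluated in the probability representation. The advantage of that representation is that the per‑shot output vector $\XX$ is supported on a single coordinate with value $1$, so $X_iX_j = \delta_{ij}X_i$, i.e. $\XX\XX^T = \Delta(\XX)$ identically in the randomness. Averaging over the noise and then over the input, and using linearity of $\Delta$ (so it commutes with both $\langle\cdot\rangle$ and $\overline{\,\cdot\,}$), yields $M = \overline{\langle\XX\XX^T\rangle} = \Delta(\overline{\langle\XX\rangle})$. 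Substituting and applying cyclicity once more,
\[
\IPC = \Tr\big(\overline{\langle\XX\rangle\langle\XX^T\rangle}\,\Delta(\overline{\langle\XX\rangle})^{-1}\big) = \Tr\big(\Delta(\overline{\langle\XX\rangle})^{-1}\,\overline{\langle\XX\rangle\langle\XX^T\rangle}\big),
\]
which is the claim; the equality with $\sum_k 1/(1+\tilde\sigma_k^2)$ is just the eigenvalue form already recorded in \Cref{theorem:ipcdef}.

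The one genuine subtlety — and essentially the only place the argument can go wrong — is regularity: $\Delta(\overline{\langle\XX\rangle})$ is invertible only if every bitstring carries strictly positive input‑averaged probability, and more basically $\DD$ must be positive definite for \Cref{theorem:ipcdef} to apply at all. I would handle this by restricting attention to the support of $\overline{\langle\XX\rangle}$ — equivalently, to the subspace on which $\overline{\langle\XX\rangle\langle\XX^T\rangle}$ has full rank — exactly as is already done implicitly in \Cref{theorem:ipcdef}, noting that coordinates with vanishing averaged probability contribute no capacity and may be dropped without changing either side. The remaining ingredients (the two trace rearrangements and the one‑hot identity $\XX\XX^T=\Delta(\XX)$) are routine.
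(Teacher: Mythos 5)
Your proposal is correct and follows essentially the same route as the paper: invert the defining relation for $\II+\Qxit$, use cyclicity of the trace, and exploit the one-hot/Bernoulli structure of the probability representation to replace $\overline{\langle\XX\XX^T\rangle}$ by $\Delta(\overline{\langle\XX\rangle})$. Your explicit attention to the invertibility of $\Delta(\overline{\langle\XX\rangle})$ and to why the change of output coordinates leaves the IPC unchanged is a welcome tightening of details the paper leaves implicit, but it is not a different argument.
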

\begin{proof}
    \begin{equation}
    \begin{split}
     Tr\left((\II + \Qxit)^{-1}\right)=Tr((\DD^{-\frac12}\VV^T(\Delta(\overline{\langle \XX\rangle}) - \\
     \overline{\langle\XX\rangle\langle\XX^T\rangle})\VV\DD^{-\frac12} + \II)^{-1})\\
     =Tr\left((\overline{\langle\XX\rangle\langle\XX^T\rangle}^{-1}\Delta(\overline{\langle \XX\rangle}))^{-1}\right)\\
     = Tr\left(\Delta(\overline{\langle \XX\rangle})^{-1}\overline{\langle\XX\rangle\langle\XX^T\rangle}\right)
    \end{split}
\end{equation}
where we have expanded the variance with respect to the reservoir stochasticity, taking advantage of the fact the signals are Bernoulli random variables in the probability representation.
\end{proof}

\begin{theorem}\label{theorem:generateddesign}
The IPC of any physical stochastic reservoir is polynomial.
\end{theorem}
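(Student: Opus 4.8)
The plan is to evaluate the IPC directly from the closed form of \Cref{lem:probrep} in the probability representation and then play the $2^n$ summands off against two elementary facts: the bitstring probabilities sum to one at every input, and by \Cref{lemma:prob_changes} they cannot be redistributed faster than polynomially in $n$ over an input range that is itself polynomial in $n$. Writing $p_k(u)$ for the probability of bitstring $k$ (suppressing the memory dependence, as in the surrounding text), the outputs in the probability representation are Bernoulli, so $\langle X_k\rangle = p_k$ and $\langle X_kX_j\rangle=\delta_{kj}p_k$; hence the off-diagonal of $\overline{\langle\XX\rangle\langle\XX^T\rangle}$ drops out of the trace and \Cref{lem:probrep} gives
\begin{equation*}
\IPC=\Tr\!\left(\Delta(\overline{\langle\XX\rangle})^{-1}\,\overline{\langle\XX\rangle\langle\XX^T\rangle}\right)=\sum_{k}\frac{\overline{p_k^2}}{\overline{p_k}},
\end{equation*}
where the sum runs over all $2^n$ bitstrings and terms with $\overline{p_k}=0$ are dropped (equivalently, the inverse is taken on the support of $\overline{\langle\XX\rangle}$).

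Next I would bound each summand by the peak of the corresponding signal: since $0\le p_k(u)\le 1$ we have $p_k(u)^2\le p_k(u)\,\sup_v p_k(v)$ pointwise, so $\overline{p_k^2}\le\overline{p_k}\,\sup_u p_k(u)$ and therefore $\IPC\le\sum_k\sup_u p_k(u)$. Now split $\sup_u p_k(u)=\inf_u p_k(u)+\mathrm{osc}(p_k)$, where the oscillation of each (absolutely continuous) signal is at most its total variation over the input range, $\mathrm{osc}(p_k)\le\int\left|p_k'(u)\right|du$. Summing over $k$, the infima are controlled by normalization, $\sum_k\inf_u p_k(u)\le\sum_k\overline{p_k}=\overline{\sum_k p_k}=1$, while the oscillations are controlled by \Cref{lemma:prob_changes}: exchanging the (nonnegative) sum and integral,
\begin{equation*}
\sum_k\int\left|p_k'(u)\right|du=\int\left\|\frac{dp(u)}{du}\right\|_{1}du\le\Bigl(\sup_u\left\|\frac{dp(u)}{du}\right\|_1\Bigr)\cdot(\text{input range})\le\mathrm{poly}(n),
\end{equation*}
so that $\IPC\le 1+\mathrm{poly}(n)$, which is the claim.

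The step I expect to take the most care is the oscillation estimate, specifically the claim that the quantity that must be polynomial is the $\ell_1$ (total-variation) rate of change of the full probability vector, not merely the rate of change of an individual coordinate; this is precisely where the locality of a physical reservoir is used, since a bounded number of bounded-rate $k$-body elements can only move a polynomial amount of total probability per unit change of the input, whereas bounding each $\left|p_k'\right|$ alone is not enough (one can construct Lipschitz probability vectors whose IPC grows like $2^{n/2}$). This is also the place where the reservoir's memory and a possibly vector-valued input must be reinstated, replacing the one-dimensional total variation $\int\left|p_k'\right|du$ by a variation of $p$ over the relevant bounded input-history domain; both refinements are meant to be supplied by \Cref{lemma:prob_changes} together with the polynomial-time/$k$-body hypotheses, after which everything else is the two-line estimate above.
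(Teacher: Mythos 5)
Your proof follows the same overall strategy as the paper's --- reduce to the probability representation via \Cref{lem:probrep} to get $\IPC=\sum_k \overline{p_k^2}/\overline{p_k}$, then combine the normalization $\sum_k p_k(u)=1$ with the polynomial rate bound of \Cref{lemma:prob_changes} --- but your execution of the key estimate is genuinely different and substantially tighter. The paper argues informally that each signal has ``tails'' $p_k\sim 1/g_k(u)$ with $\sum_k 1/g_k=1$ and peaks at most $\mathrm{poly}(n)/g_k$, and sums; your decomposition $\sup_u p_k = \inf_u p_k + \mathrm{osc}(p_k)$, with $\sum_k\inf_u p_k\le\sum_k\overline{p_k}=1$ handled by normalization and $\sum_k\mathrm{osc}(p_k)$ handled by total variation, turns that heuristic into a two-line inequality and yields the explicit bound $\IPC\le 1+\mathrm{poly}(n)$. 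More importantly, you correctly isolate the one place where physicality actually enters: the bound must be on $\sup_u\|dp/du\|_1$, the total-variation rate of the \emph{whole} probability vector, not on individual coordinates, and your observation that coordinatewise Lipschitz control alone permits $\IPC\sim 2^{n/2}$ (e.g.\ $2^{n/2}$ signals of height $2^{-n/2}$ active at a time, rotating with $u$) is a real counterexample that the paper's ``peak $\le\mathrm{poly}\cdot$tail'' phrasing does not obviously exclude. The only caveat is that \Cref{lemma:prob_changes} as stated, and its proof in the appendix (which works amplitude-by-amplitude), is ambiguous between the per-coordinate and $\ell_1$ readings; your argument requires the $\ell_1$ reading, which you justify from $k$-body locality but which strictly speaking is a strengthening of the lemma as written. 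Since the paper's own proof implicitly needs the same strengthening, this is a gap in the lemma rather than in your argument.
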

\begin{proof}
The right hand side of \Cref{lem:probrep}, with the notation in \Cref{def:probrep}, gives
\begin{equation}
   \IPC = \sum_k^{2^n} \frac{\int d\mu (u) p_k^2(u)}{\int d\mu (u) p_k(u)},
\end{equation}
for some measure $\mu$. For any value of $u$ we have that 
\begin{equation}
\sum^{2^n}_k  p_k(u) = 1.
\end{equation}
Because of this, the signals must generally be relatively small, and moreover when they are not small, they must decay rapidly. For each $p_k(u)$ we imagine that it has behavior in these decaying regions, which we will call ``tails'', proportional to some $1/g_k(u)$, i.e. $p_k(u)\sim 1/g_k(u)$, so that we have the condition 
\begin{equation}\label{eq:tailcondition}
\sum^{2^n}_k 1/g_k(u) = 1
\end{equation}
We might worry that they zero out in pathological ways, and so we might instead want to consider neighborhood's - Markov's inequality can achieve something similar to this discussion, but we avoid doing this for now.

For instance, a constant number can have constant tails, a polynomial number can have polynomial tails and any super polynomial number needs to have inverse super polynomial tails. Because the functions can only grow polynomially by \Cref{lemma:prob_changes}, we have that they have peaks that are $O(\poly(u)/g_k(u))$. We have so far written all functions as functions of $u$, but we note that, as previously discussed in \Cref{lemma:prob_changes}, the scale of the drive in any family of parameterized reservoirs will be related to $n$. Because we are integrating out $u$ below, we replace the functional dependence on $u$ with one on $n$. The IPC is thus bounded as given as 
\begin{equation}
   \IPC = \sum_k^{2^n} \frac{\int d\mu (u) p_k^2(u)}{\int d\mu (u) p_k(u)} \leq \sum_k^{2^n} \poly(n)/g_k(n) = \poly(n),
\end{equation}
where we have bounded each term based on the inequality $\int d\mu p_k^2(u) \leq \frac{\poly(n)}{g_k(n)}\int d\mu p_k(u)$, and used \Cref{eq:tailcondition}.
\end{proof}

\section{Connections to Learning Theory}\label{sec:connections}

In this section we discuss connections between the results proved in the previous section and modern ideas in statistical learning theory.
\subsection{A lower bound on the fat-shattering dimension}

In the context of machine learning and statistical learning theory, complexity measures are used to characterize the expressive power of hypothesis classes and bound generalization error. One such complexity measure is the \emph{fat-shattering dimension} \cite{kearns1994efficient}, a concept that extends the classical VC Dimension \cite{shalev2014understanding} to real-valued function classes, making it particularly suited for studying learning behavior of probabilistic classifiers and regression problems. We will start by introducing the fat-shattering dimension, and a theorem of Barlett et al. \cite{bartlett1994fat}. We will use this theorem to prove \Cref{theorem:nopoly} which states that the reservoir dynamics, i.e.
\begin{equation}\label{eq:fprime}
F' = \{p_{0...0}(t), \ldots, p_{1...1}(t)\},
\end{equation}
are not agnostically learnable (defined below in \Cref{def:agnostic}), and consequently have super polynomial fat-shattering dimension.

\begin{definition}
Let $\mathcal{X}$ be a domain of instances (i.e. an unlabeled data set) and let $\mathcal{H}$ be a class of real-valued functions mapping from $\mathcal{X}$ to $[0, 1]$, i.e., $h: \mathcal{X} \rightarrow [0, 1]$. Given a real value (the ``width'') $\gamma > 0$, the $\gamma$-fat-shattering dimension of $\mathcal{H}$, denoted by ${\rm fat}_\gamma(\mathcal{H})$, is defined as the largest natural number $d$ for which there exist $d$ instances in $\mathcal{X}$ and a set of thresholds $\{t_1, \ldots, t_d\} \subseteq [0, 1]$  such that for each subset $S \subseteq \{1, \ldots, d\}$, there is a function $h_S \in \mathcal{H}$ satisfying the following conditions:

\begin{itemize}
  \item For every $i \in S$, $h_S(x_i) \ge t_i + \gamma$.
  \item For every $i \notin S$, $h_S(x_i) \le t_i - \gamma$.
\end{itemize}
\end{definition}
The fat-shattering dimension captures the ability of a hypothesis class to have a substantial gap of at least $2\gamma$ between the values assigned by certain hypotheses to elements inside subset $S$ and elements outside subset $S$. Including the thresholds and the real-valued range of the functions makes the fat-shattering dimension valuable when studying learning behavior for probabilistic classifiers and regression problems. In this work, we consider a reservoir that has $2^n$ possible outputs. We imagine using these outputs to perform a classification task on the input signal by considering a linear combination of empirical estimates $\hat{p}_i(u)$ to perform binary classification on $u$. As a particularly illustrative example, consider ``switching signals'' (e.g \Cref{fig:exppoly}) which, upon receiving $u_i$ with $i\in S$, raises $p_{0...0}(u_i)$ above $0.5$ by at least $\gamma$ and with $i \notin S$ lowers $p_{0...0}(u_i)$ below $0.5$ by at least $\gamma$. Choosing $t_i=0.5$, such a reservoir has a fat-shattering dimension of at least $|S|$. Such a reservoir may not be implementable, however, given the specific dynamics available, or the details of the input signal. In particular, the example of ``switching signals'' (\Cref{fig:exppoly} is limited by the ability of physical system to drive large enough changes in the dynamics to produce these signals. Furthermore, we have considered the deterministic case here, where the functions $p_k(u)$ are treated as accessible real-valued functions. To accurately model the stochastic signals considered in this paper, we must consider the setting where these real-valued functions are instead corrupted by noise. In particular, we will assume they are the parameters of a Bernoulli distribution.

\begin{figure}
   \centering \includegraphics[scale=.4]{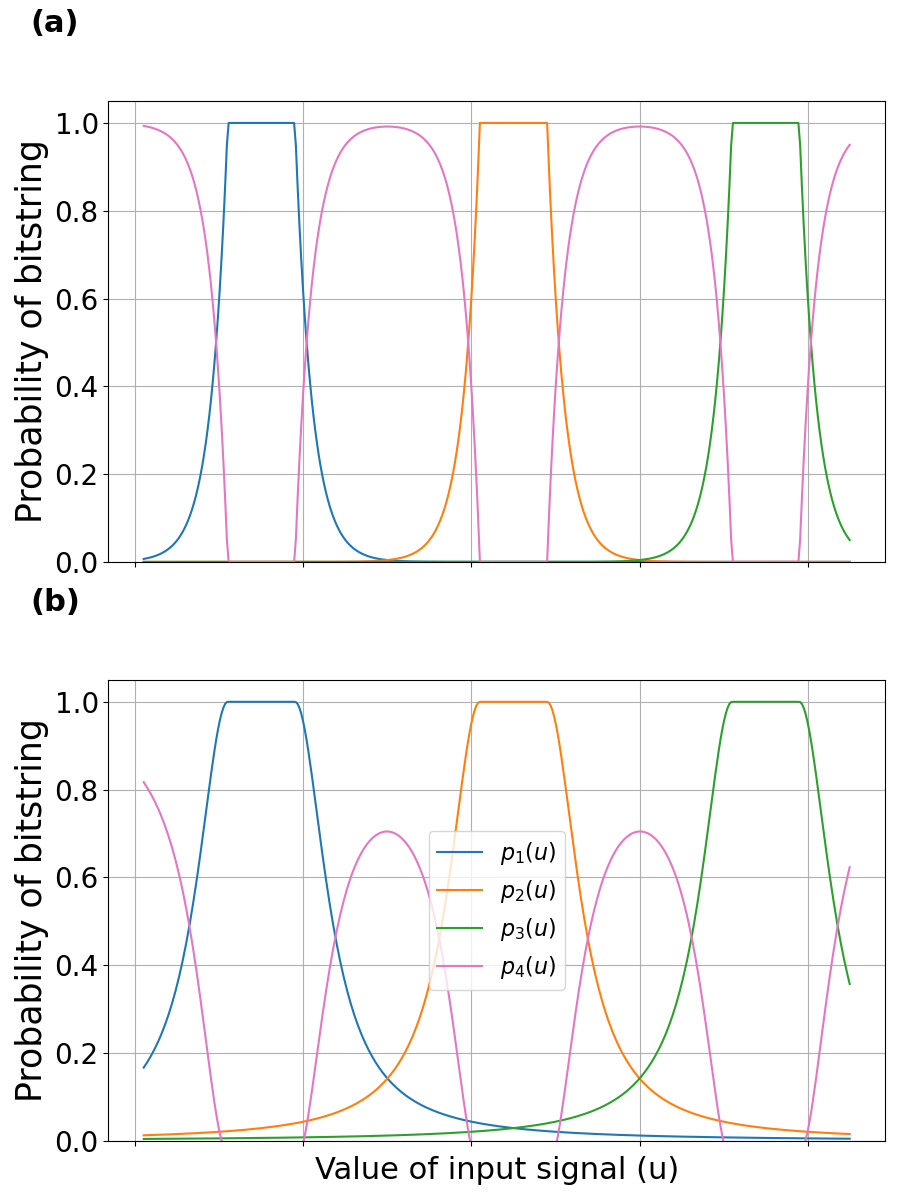}
    \caption{(a) ``Switching signals'', with exponential tails, proportional to $2^u$. With exponential changes in the probabilities, the reservoir is able to drive very rapid changes in the output signals, and thus able to fit four signals with trivial confusion probabilities near their peaks. In particular, observe that the red, low alpha signal is able to achieve a probability of nearly one between the first and second (blue and orange) and second and third (orange and green) signals. (b)``Switching signals'', with polynomial tails, proportional to $u^2$. The overlap where the signals cross is more substantial, and consequently fewer signals can be fit into the space. This results in significant rates of ``confusion'', e.g. the red signal has probabilities that are significantly smaller than 1 in the regions between pairs of the first and second (blue and orange), or second and third (orange and green) higher alpha signals.}
    \label{fig:exppoly}
\end{figure}

To this end, \cite{bartlett1994fat, kearns1994efficient, kearns1992toward,kearns1998efficient, haussler1992decision} consider the model of probabilistic computation. To start, we define probabilistic concepts and agnostic learning.
\begin{definition}{\rm\cite{kearns1994efficient}}
A probabilistic concept $f$ over a domain set $X$ is a mapping $f: X \rightarrow[0,1]$. For each $x \in X$, we interpret $f(x)$ as the probability that $x$ is a positive example of the probabilistic concept $f$. A learning algorithm in this framework is attempting to infer something about the underlying target probabilistic concept $f$ solely on the basis of labeled examples $(x, b)$, where $b \in\{0,1\}$ is a bit generated randomly according to the conditional probability $f(x)$, i.e., $b=1$ with probability $f(x)$. The value $f(x)$ may be viewed as a measure of the degree to which $x$ exemplifies some concept $f$. 
\end{definition}

To connect the capacity (\Cref{def:capacity}) to existing work in learning theory, we now define the error integral of a classifier.
\begin{definition}\label{def:error}
\begin{equation}
{\rm er}_P(h)=\int |h(x)-y| d P(x, y) = \overline{\langle \sqrt{(h(x)-y)^2}\rangle},
\end{equation}
denotes the error integral of a classifier $h$ trained on samples from a probability distribution $P(x, y)$, with the averaging notation being defined in \Cref{def:averages}.
\end{definition}

The following definition requires the learner to perform almost as well, with high probability, as the best hypothesis in some class $G$, referred to as a \textit{touchstone class}, for any particular learning task. The word agnostic in this setting is used because there is no assumption of an underlying function generating the training examples. We will consider a randomized learning algorithm which takes a sample of length $m$ and chooses sequences $z\in Z^m$ at random from $P^m_Z$, and gives it to a deterministic mapping $A$ as a parameter. Deterministic algorithms are a subset of these mappings where the $A$ ignores the random string. 

\begin{definition}{\rm \cite{bartlett1994fat}}\label{def:agnostic}
    Suppose $G$ is a class of $[0,1]$-valued functions defined on $X$, $P$ is a probability distribution on $X \times[0,1], 0<\varepsilon, \delta<1$, and $m \in \mathbb{N}$. A randomized learning algorithm $L$ is a pair $(A, P_Z)$, where $P_Z$ is a distribution on a set $Z$, and $A$ is a mapping from $\bigcup_m(X\times\mathbb{R})^m\times Z^m$ to $[0,1]^X$. For an algorithm $A$ and a distribution to be learned $D_Z$ on a set $Z$, we write that $L=\left(A, D_Z\right)$. We say that $L$ $(\varepsilon, \delta)$-learns in the agnostic sense with respect to $G$ from $m$ examples if, for all distributions $P$ on $X \times[0,1]$
$$
\begin{array}{r}
\left(P^m \times D_Z^m\right)\left\{(x, y, z) \in X^m \times[0,1]^m \times Z^m:\right. \\
\left.{\rm er}_P(A(x, y, z)) \geqslant \inf _{f \in G} {\rm er}_P(f)+\varepsilon\right\}<\delta,
\end{array}
$$
where ${\rm er}_P(\cdot)$ is the error integral introduced in \Cref{def:error}. The function class $G$ is agnostically learnable if there is a learning algorithm $L$ and a function $m_0:(0,1) \times(0,1) \rightarrow \mathbb{N}$ such that, for all $0<\varepsilon, \delta<1$, algorithm $L(\varepsilon, \delta)$-learns in the agnostic sense with respect to $G$ from $m_0(\varepsilon, \delta)$ examples. If, in addition, $m_0$ is bounded by a polynomial in $1 / \varepsilon$ and $1 / \delta$, we say that $G$ is small-sample agnostically learnable.

\end{definition}
 In our case, we will see that physical stochastic reservoir computers provide an example of a particular probability distribution for which learning the functions describing their dynamics is not small-sample agnostically learnable due to the degradation in capacity on those functions. We start with a corollary of \Cref{theorem:generateddesign} and a theorem by Bartlett et al., before proving our second theorem.

\begin{corollary}\label{cor:manybad}
For any physical stochastic reservoir there are $\Omega(g(n))$ learning tasks $f_i$ such that
\begin{equation}
   {\rm er}_P(f_i) \geq 1-O(\poly(n)/g(n)),
\end{equation} 
where $g(n) = \omega(\poly(n))$.
\end{corollary}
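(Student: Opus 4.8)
The plan is to reuse the termwise estimate already contained in the proof of \Cref{theorem:generateddesign}, apply an elementary pigeonhole over the $2^n$ bitstring signals, and then convert ``vanishing capacity'' into ``error integral close to $1$'' via \Cref{def:capacity} and \Cref{def:error}. First I take the learning tasks $f_i$ to be the bitstring probability functions $p_k$ that make up the reservoir dynamics $F'$ of \Cref{eq:fprime}. By \Cref{lem:probrep} the capacity decomposes termwise, $\IPC = \sum_{k=1}^{2^n}\overline{p_k^2}/\overline{p_k}$, and (reading \Cref{def:capacity} in the probability representation) the $k$-th summand is the limiting capacity $C_T[p_k]$ of the reservoir at reconstructing $p_k$ from its own outputs. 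The proof of \Cref{theorem:generateddesign} produces, for each $k$, a tail exponent $g_k(n)$ with $C_T[p_k]=\overline{p_k^2}/\overline{p_k}\le \mathrm{poly}(n)/g_k(n)$ and the normalization $\sum_{k=1}^{2^n}1/g_k(n)=1$ of \Cref{eq:tailcondition}, the passage from the drive magnitude $u$ to the system size $n$ being exactly as there and resting on \Cref{lemma:prob_changes}.

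Second, I would turn the bound $C_T[p_k]\le \mathrm{poly}(n)/g_k(n)$ into a lower bound on the error integral. Since $C_T[p_k]=1-\min_\omega \overline{\langle(\hat y_\omega-y)^2\rangle}/\overline{y^2}$ at $y=p_k$, a near-zero capacity forces the best attainable normalized mean-squared reconstruction error to be $\ge 1-\mathrm{poly}(n)/g_k(n)$; feeding this through the identity ${\rm er}_P(h)=\overline{\langle\sqrt{(h(x)-y)^2}\rangle}$ of \Cref{def:error}, the elementary inequality $t\le\sqrt t$ for $t\in[0,1]$ applied to the bounded squared residuals, and the appropriate fixed normalization of the target signal (harmless since $C_T$ is scale invariant and the reservoir outputs lie in $[0,1]$), yields ${\rm er}_P(f_k)\ge 1-\mathrm{poly}(n)/g_k(n)$ for the corresponding learning task.

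Third, I would count. The normalization $\sum_{k=1}^{2^n}1/g_k(n)=1$ allows at most $G(n)$ of the indices to satisfy $g_k(n)<G(n)$, for any threshold $G(n)$, because each such index contributes at least $1/G(n)$ to the sum. Taking $g(n):=2^{n/2}$, which is $\omega(\mathrm{poly}(n))$, there are at least $2^n-2^{n/2}=\Omega(2^n)=\Omega(g(n))$ indices $k$ with $g_k(n)\ge g(n)$, and for every such $k$ the second step gives ${\rm er}_P(f_k)\ge 1-\mathrm{poly}(n)/g_k(n)\ge 1-\mathrm{poly}(n)/g(n)$. This is precisely the asserted family of $\Omega(g(n))$ tasks $f_i$ with the stated error lower bound, and any super-polynomial, sub-exponential choice of $g$ works equally well.

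The step I expect to be the main obstacle is the second one: making rigorous the passage from the mean-squared / capacity language of \Cref{def:capacity} to the $L^1$ error integral of \Cref{def:error}, and in particular pinning the target normalization so that ``$C_T[p_k]$ near $0$'' genuinely translates into ``${\rm er}_P(f_k)$ near $1$'' rather than ``${\rm er}_P(f_k)$ near $\overline{p_k^2}$''. Identifying the per-term IPC contribution $\overline{p_k^2}/\overline{p_k}$ with the task capacity $C_T[p_k]$ (discarding cross-correlation terms, which only increase $C_T[p_k]$) is the other place requiring a little care. Everything past that bridge is the estimate already proved in \Cref{theorem:generateddesign} together with pigeonhole.
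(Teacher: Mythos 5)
Your proposal follows essentially the same route as the paper's proof: invoke \Cref{theorem:generateddesign} to obtain superpolynomially many tasks of capacity $O(poly(n)/g(n))$, convert small capacity into a large error integral via the relation between \Cref{def:capacity} and \Cref{def:error}, and normalize the targets so the bound reads $1-O(poly(n)/g(n))$ rather than something proportional to $\overline{p_k^2}$. Two points of divergence are worth noting. First, where the paper simply asserts that superpolynomially many functions have poor capacity and then passes to the touchstone class $F$ of low-capacity linear combinations (explicitly including the poor-SNR eigentasks, which form an orthonormal family so that $\langle f_i^2\rangle_T=1$ can be imposed directly), you make the counting explicit via pigeonhole on the constraint $\sum_k 1/g_k(n)=1$: at most $G(n)$ indices can have $g_k(n)<G(n)$, so choosing $g(n)=2^{n/2}$ leaves $\Omega(2^n)$ surviving tasks. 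This is a genuine tightening of the paper's argument and is the cleanest part of your write-up. Second, you correctly identify the two soft spots, and they are the same ones the paper glosses over: (i) the per-term quantity $\overline{p_k^2}/\overline{p_k}$ is a diagonal entry of the trace in \Cref{lem:probrep}, not literally the capacity $C_T[p_k]$ computed with access to all $2^n$ outputs, so the identification needs the one-sided remark you make about cross terms; and (ii) the $L^2$-to-$L^1$ bridge plus rescaling of the targets, which the paper handles by working in the orthonormal eigentask basis rather than with the raw $p_k$. Your version is acceptable provided you note, as the paper implicitly does, that after rescaling to unit $L^2$ norm the scale-invariance of $C_T$ preserves the capacity bound while the error integral inherits the factor $\sqrt{\langle f_i^2\rangle_T}=1$.
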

\begin{proof}
Because, for small errors ($|(h(x) - y)|\leq 1$), 
\begin{equation}
{\rm er}_P(h) \geq  \sqrt{\langle h^2 \rangle_T (1- C_T[h])},
\end{equation}
we see a small capacity also implies a large error. This will allow us to prove our corollary and connect with the statistical learning literature. Specifically, we see that
\begin{equation}
    {\rm er}_P(h) \geq \sqrt{\langle h^2\rangle_T} - \sqrt{\langle h^2\rangle_T}C_T[h]/2 + O(C_T[h]^2),
\end{equation}
for small capacity.

In \Cref{theorem:generateddesign} we argued that for any physical stochastic reservoir there are super-polynomially many functions in the touchstone class, $f_i\in F'$ (defined in \Cref{eq:fprime}), that have inverse superpolynomially poor capacity, i.e. $h$ such that
\begin{equation}
    C_T[h] = O(\poly(n)/g(n)),
\end{equation}
where $g(n) \in \omega(\poly(n))$.
We consider the learning problem with the touchstone class $F$
\begin{equation}
\begin{split}
F = \{f_i \mid & \text{$f_i$ is a linear combination of functions in } F' \\
               & \text{and } C_T(f_i) = O(\poly(n)/g(n)) \}.
\end{split}
\end{equation}

Note that in particular, this class includes the poor SNR eigentasks of the reservoir. The IPC, as we have seen, gives a measure of the SNR over each signal, and is normalized by definition. Replacing the capacity in our previous inequality with $\poly(n)/g(n)$, we have
\begin{equation}\label{eq:missingcap}
    {\rm er}_P(f) \geq \sqrt{\langle f_i^2\rangle_T}(1-O(\poly(n)/g(n))).
\end{equation} 
These functions form an orthonormal basis, and so we set the norm of the function to one, arriving at the desired inequality.
\end{proof}

This corollary relates the classification error of \Cref{def:error} of a reservoir performing an eigentask to its capacity, defined in \Cref{def:capacity}. Intuitively this is possible because the capacity is a measure of the SNR, and low SNR makes classification more difficult. We now give a theorem of Bartlett et al. that relates the fat-shattering dimension to small-sample agnostic learnability, and follow with our own theorem, showing that the fat-shattering dimension of the probabilistic concept class of reservoir functions has super-polynomial fat-shattering dimension. We start with a technical definition.

\begin{definition}{\rm \cite{bartlett1994fat, haussler1992decision}}
    Consider a $\sigma-$algebra $\mathscr{A}$ on $Z$. A class of functions $G$ is 
    $\mathbf{P H}$-permissible if it can be indexed by a set $T$ such that
    \begin{enumerate}
        \item $T$ is a Borel subspace of a compact metric space $\overline{T}$ and
        \item the function $f: Z\times T\rightarrow \mathbb{R}$ that indexes $G$ by $T$ is measurable with respect to the $\sigma-$algebra $\mathscr{A}\times\mathscr{B}(T)$, where $\mathscr{B}(T)$ is the $\sigma-$algebra of Borel sets on $T$.
    \end{enumerate}
    
We say a class $G$ of real-valued functions is permissible if the class $l_G : \{l_g\mid g\in G\}$, $l_g:(x,y)\to (y-g(x))^2$ is $\mathbf{P H}$-permissible. 

\end{definition}

\begin{theorem}{\rm \cite{bartlett1994fat}}\label{theorem:bart}
  Suppose $G$ is a permissible class of $[0,1]$ valued functions defined on $X$. Then $G$ is agnostically learnable if and only if its fat-shattering function is finite, and $G$ is small-sample agnostically learnable if and only if there is a polynomial $p$ such that ${\rm fat}_\gamma(G)<p(1 / \gamma)$ for all $\gamma>0$.
\end{theorem}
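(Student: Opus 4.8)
The plan is to prove the quoted result by the classical route: show that agnostic learnability is equivalent to a uniform law of large numbers for the loss class, and then control that uniform law by the fat-shattering dimension via a real-valued analogue of the Vapnik--Chervonenkis/Sauer--Shelah machinery. First I would reduce learnability to uniform convergence of the empirical loss. Work with the $1$-Lipschitz loss class $\ell_G = \{\ell_g : (x,y)\mapsto |y-g(x)| \mid g\in G\}$ and observe that if for every $\varepsilon,\delta$ there is an $m_0$ so that, with $P^{m}$-probability at least $1-\delta$, $\sup_{g\in G}|{\rm er}_P(g)-\widehat{{\rm er}}_m(g)|<\varepsilon/2$ for all $m\ge m_0$, then empirical risk minimisation over $G$ $(\varepsilon,\delta)$-learns in the agnostic sense with respect to $G$: ${\rm er}_P(\hat g)\le \widehat{{\rm er}}_m(\hat g)+\varepsilon/2\le \widehat{{\rm er}}_m(f^\ast)+\varepsilon/2\le \inf_{f\in G}{\rm er}_P(f)+\varepsilon$. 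The $\mathbf{PH}$-permissibility hypothesis is precisely what makes the supremum a measurable random variable and licenses the ghost-sample symmetrisation below, so the whole problem becomes: bound the rate at which $\ell_G$ satisfies a uniform law of large numbers.

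Second, I would pass from $\ell_G$ to $G$ and then to covering numbers. Symmetrisation bounds the deviation probability $P\{\sup_g|{\rm er}_P(g)-\widehat{{\rm er}}_m(g)|>\varepsilon\}$ by roughly $\mathbb{E}\,\mathcal{N}(\varepsilon/8, \ell_G, L_1(\text{empirical measure on }2m\text{ points}))\exp(-c m\varepsilon^2)$, and since $x\mapsto|y-x|$ is $1$-Lipschitz on $[0,1]$ the covering numbers of $\ell_G$ are dominated by those of $G$ at a comparable scale. The crux is then the purely combinatorial bound of Alon--Ben-David--Cesa-Bianchi--Haussler: for any finite sample $S$ of size $m$, $\log \mathcal{N}(\varepsilon, G, L_\infty(S)) = O\!\big({\rm fat}_{c\varepsilon}(G)\,\log^2(m/\varepsilon)\big)$. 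If ${\rm fat}_\gamma(G)$ is finite for every $\gamma$, the exponential factor $\exp(-cm\varepsilon^2)$ eventually dominates once $m$ exceeds a threshold depending only on $\varepsilon,\delta$ and the finite value ${\rm fat}_{c\varepsilon}(G)$, giving agnostic learnability; and if moreover ${\rm fat}_\gamma(G)<p(1/\gamma)$ for a polynomial $p$, then, solving the resulting inequality for $m$ (absorbing the $\log^2 m$ term self-consistently), $m_0(\varepsilon,\delta)$ can be taken polynomial in $1/\varepsilon$ and $1/\delta$, which is exactly small-sample agnostic learnability.

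Third, the converse directions come from a combinatorial lower bound. If ${\rm fat}_\gamma(G)$ is infinite for some fixed $\gamma$, then for every $d$ there exist $d$ points and thresholds shattered with width $\gamma$; putting a uniform distribution on those $d$ points and choosing the label of each point to be a fresh fair coin encodes $\Omega(d)$ bits of independent information into $P$, so any learner that has seen $m\ll d$ examples must err by at least $\gamma$ on a constant fraction of the unseen shattered points, forcing ${\rm er}_P(A(x,y,z))\ge \inf_{f\in G}{\rm er}_P(f)+\varepsilon$ with probability $\ge\delta$ for suitable $\varepsilon,\delta$; hence $G$ is not agnostically learnable. Running the same construction at scale $\gamma\to 0$ with $d={\rm fat}_\gamma(G)$ gives $m_0(\varepsilon,\delta)=\Omega({\rm fat}_{c\varepsilon}(G))$, so if no polynomial bounds ${\rm fat}_\gamma(G)$ then no polynomial bounds $m_0$, ruling out small-sample learnability.

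The step I expect to be the main obstacle is the real-valued Sauer lemma itself — the covering-number bound in terms of ${\rm fat}_\gamma(G)$ — together with the measurability bookkeeping needed to run symmetrisation in the fully agnostic setting (no realisability, $y$ ranging over all of $[0,1]$), and the slightly delicate point that the $\log^2 m$ factor must be handled self-referentially to conclude that the final sample complexity is genuinely polynomial rather than merely finite.
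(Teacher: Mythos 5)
This statement is not proved in the paper at all: it is imported verbatim as a known theorem of Bartlett, Long, and Williamson (the citation \cite{bartlett1994fat}), so there is no in-paper argument to compare against. Judged against the actual proof in that reference, your outline is a faithful reconstruction of the standard route: the sufficiency direction does go through uniform convergence of the empirical absolute loss via symmetrisation, with the covering numbers of the ($1$-Lipschitz) loss class dominated by those of $G$ and the latter controlled by the Alon--Ben-David--Cesa-Bianchi--Haussler bound $\log \mathcal{N}(\varepsilon, G, L_\infty(S)) = O({\rm fat}_{c\varepsilon}(G)\log^2(m/\varepsilon))$; the necessity direction does come from a sample-complexity lower bound built on a distribution supported on a $\gamma$-shattered set with independently chosen targets, where the shattering property guarantees $\inf_{f\in G}{\rm er}_P(f)$ is small while any learner seeing $m \ll d$ examples must pay $\Omega(\gamma)$ on a constant fraction of unseen points. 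Your identification of the genuinely hard ingredients is also accurate: the real-valued Sauer lemma is the combinatorial core (and is itself a cited external result in Bartlett et al., so deferring it is consistent with the original), the permissibility hypothesis exists precisely to make the symmetrised suprema measurable, and the $\log^2 m$ factor is absorbed self-referentially to get a polynomial $m_0$. One small caution: agnostic learnability is not in general \emph{equivalent} to uniform convergence, so the opening framing is slightly overstated; but your argument never uses that equivalence in the problematic direction (you prove learnability \emph{from} uniform convergence and disprove it \emph{from} the shattering lower bound), so the logic is sound. As a blind sketch of a cited theorem, this is about as complete as one could reasonably ask.
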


We will now demonstrate that due to the degradation in IPC, a learning algorithm cannot, in general, differentiate between the different learning tasks described by a reservoir's eigentasks without using an exponential number of observations. Hence we will demonstrate that there does not exist a learning algorithm that can agnostically learn the reservoir dynamics with $\poly(1/\delta, 1/\epsilon)$ samples. Specifically, we show that the assumption that the class of functions encoded by the dynamics of the reservoir is learnable in the presence of noise is not compatible with that class containing many orthogonal functions. If they are learned, they are learned despite the noise, and must all therefore be similar to a single learned function. But they cannot be too similar to the learned function, because then they would be similar to each other, and they are orthogonal.

\begin{theorem}\label{theorem:nopoly}
There does not exist a polynomial p such that ${\rm fat}_\gamma(F)<p(1/\gamma)$ for all $\gamma>0$ for the concept class of functions $F$ corresponding to reservoir dynamics of an infinite family of reservoirs.
\end{theorem}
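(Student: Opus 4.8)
The plan is to prove the contrapositive of what \Cref{theorem:bart} supplies: rather than bound $\mathrm{fat}_\gamma(F)$ directly, I would show that $F$ is \emph{not} small-sample agnostically learnable, from which the claimed super-polynomial growth of $\mathrm{fat}_\gamma(F)$ follows immediately. So I would first check that $F$ meets the hypotheses of \Cref{theorem:bart} — it is a class of (after an affine rescaling into $[0,1]$) real-valued functions, and since its members are linear combinations of the finitely many continuous signals $p_k$ indexed by a compact set of weight vectors, the associated squared-loss class is $\mathbf{PH}$-permissible — and then argue by contradiction: suppose there were a randomized algorithm $L=(A,P_Z)$ together with a sample bound $m_0(\varepsilon,\delta)$ polynomial in $1/\varepsilon$ and $1/\delta$, and note that $m_0$ is forced to be the \emph{same} function for every reservoir in the infinite family, since $F$ already collects all of them.

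Next I would bring in the hard instances from \Cref{cor:manybad}: for the $n$-th reservoir there are $N=\Omega(g(n))$ mutually orthonormal functions $f_1,\dots,f_N\in F$, with $g(n)=\omega(\mathrm{poly}(n))$, each a poor-SNR eigentask. For each $i$ I would take the distribution $P_i$ on $X\times[0,1]$ arising from the probabilistic-concept model on the noisy reservoir signal for $f_i$; the point of \Cref{cor:manybad} (through its capacity-to-error inequality and \Cref{theorem:generateddesign}) is that the SNR on these directions is $O(\mathrm{poly}(n)/g(n))$, so a single labeled example from $P_i$ carries only $O(\mathrm{poly}(n)/g(n))$ worth of information about which $f_i$ was used. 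With $m=m_0(\varepsilon,\delta)$ a fixed constant (fix $\varepsilon,\delta$ to small constants), this makes the $m$-sample transcript distributions for distinct $P_i$ statistically $o(1)$-close, so — averaging also over the internal randomness $P_Z$ — the algorithm must, with probability bounded away from $0$, return a single hypothesis $h$ that is simultaneously $\varepsilon$-optimal with respect to $F$ for a constant fraction of the indices $i$.

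The final step is the orthogonality obstruction: being near-optimal for $P_i$ pins $h$ to within a fixed distance of $f_i$ in the relevant $L^2(\mu)$ geometry — here I would lean on the $L^2$ capacity bound underlying \Cref{cor:manybad} rather than manipulating the $L^1$ error integral of \Cref{def:error} directly, since the latter is minimized at the Bayes threshold rather than at $f_i$ — while $\|f_i-f_j\|_2^2=2$ for $i\neq j$ lets a packing/triangle-inequality argument cap the number of $f_i$ that can be that close to any one $h$ by an absolute constant. A constant fraction of $N=\Omega(g(n))\to\infty$ exceeds that cap for large $n$, giving the contradiction, and \Cref{theorem:bart} then yields the theorem. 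I expect the genuinely delicate parts to be the indistinguishability estimate — converting ``low capacity / poor SNR on the $f_i$'' into an honest bound on the total variation distance between the $m$-sample learning transcripts that is uniform over the family — together with keeping the affine rescaling into $[0,1]$ from quietly shrinking the effective margin $\gamma$ below what the counting argument can afford; the reduction through \Cref{theorem:bart} and the packing bound are comparatively routine.
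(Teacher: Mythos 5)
Your proposal follows essentially the same route as the paper: both reduce the theorem via \Cref{theorem:bart} to showing that $F$ is not small-sample agnostically learnable, both use the low capacity from \Cref{cor:manybad} to argue that polynomially many samples cannot distinguish the poor-SNR eigentasks (the paper anchors this to the learner being unable to rule out the identically zero function, you phrase it as total-variation closeness of the sample transcripts), and both derive the contradiction from the pairwise $L^2$ separation of the orthonormal eigentasks (the paper via a pairing/triangle-inequality sum bounded below by $|F|$, you via a packing bound). Your version is, if anything, more explicit about two points the paper elides --- the permissibility hypothesis of \Cref{theorem:bart} and the fact that the $L^1$ error integral of \Cref{def:error} is minimized at the Bayes threshold rather than at $f_i$ --- but these are refinements of the same argument, not a different one.
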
    

\begin{proof}
Since the errors are one-sided - the learner cannot perform better than the reservoir function at its own eigentask - 
the condition for agnostic learnability, is that for all $\epsilon > 0$ with probability $1-\delta > 0$, it is possible to take enough samples so that
\begin{equation}
    \sum^{|F|}_i {\rm er}_{P}(A(x, y, z)) \leq \sum^{|F|}_i{\rm er}_{P}(f_i) + \epsilon.
\end{equation}
\noindent Using \Cref{cor:manybad}, we can relate the error to the capacity, so that for the collection of functions $F$ with small capacity, this is equivalent to
\begin{equation}\label{eq:capbound}
 \sum^{|F|}_i {\rm er}_{P}(A(x, y, z))  \leq \sum^{|F|}_i (1 - c_i + \epsilon),
 \end{equation}
with high probability, where $c_i$ denote the terms that are $O(\poly(n)/g(n))$ in \Cref{eq:missingcap}. 

For small-sample agnostically learnability, we have the number of samples $m_0=\poly(1/\delta, 1/\epsilon)$. Because the identically zero function is among the functions that the learning algorithm must perform well on, i.e. $0\in F$, we can bound the the probability of success based on the probability that the learning algorithm falsely reports the identically zero function. For the functions we are considering, the capacity is low, and hence from our proof of \Cref{theorem:generateddesign} we can choose $n$ such that the probability of the learning algorithm sampling anything nonzero is small. In particular, for a probability $q$ that any particular sample is nonzero, the probability $p$ that the learner samples all zeros, consistent with the function being the zero function, is given as
\begin{equation}
p = (1-q)^{m_0} \approx m_0q \approx \poly(1/\delta, 1/\epsilon)\poly(n)/g(n),
 \end{equation}
for $g(n)\in \omega(\poly(n))$.

Making this approximation requires $m_0q \ll 1$, so that we choose $\poly(n, 1/\epsilon, 1/\delta)/g(n) \ll 1$. A sufficient condition, therefore, is choosing $n \approx \max{(1/\delta, 1/\epsilon)}$. In this case, the learner is unable to do better than guessing that the signal is small or zero, and incurring an error proportional to the size of the signal. Hence the condition for agnostic learnability becomes
\begin{equation}
\begin{split}
        \sum^{|F|}_i {\rm er}_{P}(A(x, y, z)) =  \sum^{|F|}_i(\int |f_i(x) - A(x,y,z)| d P(x, y) ) \\
        \geq  \sum^{|F|/2}_i\int |f_i(x) - f_{i+|F|/2}(x)| dP(x,y)
        \\
        \geq  \sum^{|F|/2}_i\int (f_i(x) - f_{i+|F|/2}(x))^2 dP(x, y) = |F|,
\end{split}
\end{equation}
where we have used the triangle inequality for the first inequality, $|f_i(x)-f_{i+|F|/2}(x)| \leq 1$ and $|f_i(x)-f_{i+|F|/2}(x)| \geq (f_i(x)-f_{i+|F|/2}(x))^2$ for the second inequality and orthogonality of the reservoir functions for the final equality. In particular, integrating over the input measure first, e.g.
\begin{equation}
\begin{split}
    \int d\mu(x) d\nu(y) f_1(x,y)f_2(x,y) = \\\int d\nu(y) \int d\mu(x) f_1(x,y)f_2(x,y) = 0,
\end{split}
\end{equation}
due to the orthogonality of the functions under $\mu$. Putting this together with \Cref{eq:capbound}, we have 

\begin{equation}
 |F| \leq \sum^{|F|}_i {\rm er}_{P}(A(x, y, z))  \leq \sum^{|F|}_i (1-c_i + \epsilon).
 \end{equation}

 so that this is violated if
 \begin{equation}\label{eq:req}
 \frac{1}{|F|}\sum^{|F|}_i c_i > \epsilon
 \end{equation}
 i.e. that the worst signals still provide some small utility on average. This requirement can be made arbitrarily weak since the condition for agnostic learnability is that this is true for all $\epsilon$, and hence what we have demonstrated is that there is an $(\epsilon, \delta, m_0)$ for which we can choose $n$ such that the reservoir dynamics are not small-sample agnostically learnable. An alternative interpretation is that any such learner would provide a witness that the eigentasks are not orthogonal under the reservoir dynamics, since this would require the eigentasks to be too similar to each other. Hence, we conclude from \Cref{theorem:bart} that there is no polynomial $p$ such that ${\rm fat}_\gamma(F)<p(1 / \gamma)$ for all $\gamma>0$ if \Cref{eq:req} holds.
\end{proof}

At this point we have taken a kind of limit, assuming that our reservoir is a member of an infinite family of reservoirs. For each $(\epsilon, \delta),$ we have required only that the number of samples scales polynomially with the reservoir size, and hence not only does the function class corresponding to the infinite family of reservoirs have a superpolynomial fat-shattering dimension, but also this result includes all settings where the functions that constitute this family are efficient (in the number of samples) to compute.  
\section{Discussion}

When we analyze a physical system---whether for thermodynamics, mechanics, or computation---we often imagine ``enlarging'' the system by adding degrees of freedom. In a reservoir-computing setting, one typically expects that adding more degrees of freedom increases the expressive or computational power of the reservoir. This intuition aligns with the idea that a larger system can encode more states, allowing richer dynamics or more intricate transformations of input signals.

In this paper, we have considered an analog encoding of our data, using the probabilities of the bitstrings to perform our computation. It is equally possible to encode a signal in the bitstrings themselves, and this is precisely what is done in digital computation, which is well known to be robust to these issues. For an analog system in a noisy or thermal environment, errors degrade performance, so there is effectively a signal-to-noise ratio that determines how well the reservoir can accomplish useful information processing. Our goal thus far has been to understand how fluctuations (i.e., noise) scale and how this scaling ties into energy dissipation. We will connect these ideas to the fluctuation--dissipation theorem (FDT) \cite{kubo1986brownian}, which provides a quantitative physical connection between noise and dissipative properties of physical variables.
\subsection{Fluctuation--Dissipation Perspective}
\label{subsec:FDT}
The classical fluctuation--dissipation theorem (FDT) \cite{kubo1986brownian} relates the spectral density to the temperature and dissipation,
\begin{equation}
  S_x(\omega) = \frac{2k_B T}{\omega} \Im[\chi(\omega)],
\end{equation}
where \(S_x(\omega)\) is the power spectral density of \(x(t)\). Hence we can imagine in our setting, the natural dynamics themselves, fluctuations, cause dissipation, and hence change the temperature. Such a temperature induces a thermal distribution on the outputs, and hence we see that there is natural noise that is proportional to the signal.



In the main text, we have modeled signals as probabilities in bitstrings.  For uncorrelated bits, variance is maximal at \(p=1/2\) (the ``middle'' of the Bernoulli range), whereas in the FDT scenarios the noise can be minimal at $x(t) = 0$, e.g. zero voltage.  However, one can construct correlated bits whose sum remains (nearly) pinned at a constant value, thus forcing the variance to vanish at certain points.  

Concretely, let \(X_1, X_2, \dots, X_n \in \{0,1\}\) be binary random variables each with 
\[
  \mathbb{P}(X_i = 1) \;=\; p.
\]
Define
\[
  X \;=\; \sum_{i=1}^n X_i.
\]
When the \(X_i\) are independent, \(X\sim \mathrm{Binomial}(n,p)\) with variance \(n\,p(1-p)\).  But one can correlate them to reduce fluctuations dramatically.  A well‐known example is to make \(S\) take only the integer values \(\lfloor np\rfloor\) or \(\lceil np\rceil\) with just the right probabilities, thereby achieving the minimal possible variance for given \(p\).  In particular, if \(p = 1/n\), one can make \(X = 1\) with probability 1, i.e.\ \(\mathrm{Var}(X)=0\).

Suppose we set
\[
  p \;=\; \frac{1}{n} + \epsilon
  \quad
  (\text{with }\epsilon\text{ small}),
\]
so that the expected sum is 
\(\mathbb{E}[X] = n\,p = 1 + n\,\epsilon.\)
We can realize this exactly using a two‐value random sum,
\[
  X \;=\;
  \begin{cases}
    1, & \text{with probability } 1 - n\epsilon,\\
    2, & \text{with probability } n\epsilon,
  \end{cases}
\]
and (implicitly) choose \emph{which} bits are 1 so that each has marginal \(p\).  Then $\mathrm{Var}(X)= n\epsilon $$(1 - n\epsilon)$. For \(\epsilon\to 0\), this vanishes linearly in \(\epsilon\).  

From the FDT viewpoint, the exact scaling of noise versus signal depends on how one identifies the degrees of freedom and the relevant ``signal'' parameter.  In an Ising‐like or Bernoulli‐bit system, probabilities \(p\) are dimensionless, and the minimal achievable variance at special points \(p = k/n\) necessarily vanishes, then increases as we perturb \(p\).  This perspective confirms that any physical system obeying a fluctuation--dissipation relation can, in principle, be approximated by a large collection of correlated binary degrees of freedom, reproducing the same qualitative phenomenon of ``noise growing with signal.'' While the variance in this example is linear in $\epsilon$, a simple majority voting scheme can be used to translate the linear variance of $X$ into a quadratic one, which may be more suitable depending on the specifics of the physical dynamics being considered.

\subsection{Integrated Circuits}
As a result of our construction in the beginning of the paper, we found the exponential degradation of the IPC even in the exponentially enlarged spaced consisting of all output bitstrings, resulting in the conclusion that the functions that describe any analog reservoir's dynamics have large fat-shattering dimension. Because the fat-shattering dimension lower bounds the generalization error, any analog learning scheme with a reservoir computer will face difficulty. Nevertheless, despite these difficulties we can attempt to identify areas where we might be able to produce useful analog reservoirs. To this end we will again consider integrated circuits as an example, motivated by our previous discussion. Generally integrated circuits exhibit Rentian scaling \cite{landman1971pin}, which is  that the observation that the typical number of connections in an integrated circuit $c$ is given as 
\begin{equation}
    c = tn^p,
\end{equation}
with $0 \leq p\leq 1$, and $t$ the average number of interconnects per circuit element $n$. In our model, we have considered $c=kn$ for some fixed $k$. In general, it is possible to consider families of reservoirs that allow for polynomial connectivity $k(n)$ rather than simply a constant connectivity $k$, however we still must constrain the connectivity to be $o(n)$ under physical considerations. Specifically, as the system grows with $n$, its total energy grows extensively in the number of circuit elements. If the energy density is allowed to grow with $n$ it will eventually collapse into a black hole. Consequently the space the reservoir occupies must grow with the number of circuit elements \cite{aaronson2005np}. Because communication over a distance requires time linear in that distance, and we have assumed polynomial constraints on time, we see that $d\sim \log(\poly(n))$ at most. In this derivation, we have made no assumption on the geometry of the system and the coupling can be arbitrarily long range, due to the generality of the theorem used in \Cref{app:physical}. As mentioned previously, the time to compile an arbitrary $o(n)-$bit operation is $2^{o(n)} = o(2^n)$. Consequently, any sublinear connectivity can be efficiently compiled in subexponential time to an fixed connectivity gate set, and we lose no generality in assuming a fixed $k$.

The degradation in the IPC that we have found is fundamentally due to the fact that we have encoded our signal into the same degree of freedom that is impacted by noise. To see this in a general physical example, we can model noise in an integrated circuit and consider Johnson-Nyquist noise on a voltage signal $V(t)$, due to an ideal resistor, characterized by a flat power spectral density,

\begin{equation}
S_V(\omega) = 4 k_B T R,
\end{equation}

where $k_B$ is Boltzmann’s constant, $T$ is the temperature, and $R$ is the resistance. Despite this noise being white, in realistic measurements we will observe the noise over a finite bandwidth $B$. Integrating the spectral density over the measurement bandwidth gives the in-band mean square noise power, $\langle P \rangle_{B}$ as
\begin{equation}
\langle P \rangle_{B} = \int_{-B/2}^{B/2} S_V(\omega) \, d\omega = 4 k_B T R B.
\end{equation}
The impact of the noise, as it enters the signal-to-noise ratio, is proportional to $\langle P \rangle _{B}$ and hence the temperature and the measurement bandwidth. The bandwidth $B$ represents the frequency range over which the system can resolve signals, which in turn is conjugate to the integration time. Thus, for low temperature the noise is small, and for fast electronics $B$ is large. Integrating for longer periods will decrease the error as $1/\sqrt{B}$ while increasing the RMS error as $\sqrt{B}$. In the case of Johnson noise, we see these effects perfectly balance, so that the only way to suppress the effects of noise are to cool the system. However, increasing the voltage increases the power dissipated, and so as before changing the signal is related to changing properties of the noise. For an efficient computation that consumes only a polynomial amount of power, the temperature will be at least $1/\poly(n)$, so that it is again impossible to rapidly cool the system to achieve large IPC - any such ``cooling and heating schedules'' would require ``switching signals'' as in \Cref{fig:exppoly}.

More generally, in this work we have argued that bits are the fundamental unit of computation, and consequently any analog signal should be constructed from the bitstring probabilities -- there is no other analog parameter to encode into. However, it is possible to motivate considering dits as might arise from a ``true'' analog element resulting in a relatively large effective number of bits, formed from an analog-to-digital converter. For $n$ dits with $d$ states, the available number of functions in the power set we considered is given by $(2^d)^n = 2^{dn}$. Consequently, making the signal ``effectively analog'' by choosing $d$ large can ameliorate the impact of these results. In particular choosing $d\sim 2^n/n$ could help overcome this degradation. Intuitively, each new state a dit can take on represents a new orthogonal direction that the analog signal can drive the ditstring probabilities. However, due to the previously discussed constraints on connectivity, this approach is limited due to physical scaling.

\subsection{Implications for Learning}

The empirical Rademacher complexity \( R_n(\mathcal{F}) \) measures how well functions in \( \mathcal{F} \) can correlate with random labels. A well-known result in empirical process theory states that if \( \text{fat}_\gamma(\mathcal{F}) > n \), then \( R_n(\mathcal{F}) \) must be at least a constant fraction of \( \gamma \) \cite{bartlett1999generalization}. Specifically, there exists a universal constant \( c > 0 \) such that:

\[
R_n(\mathcal{F}) \geq c \gamma \sqrt{\frac{F(\gamma)}{n}}.
\]

which follows from the contraction property of Rademacher complexity and chaining arguments. We've proven that there is no polynomial $ p(1/\gamma) $ such that:

\begin{equation}
    F(\gamma) < p(1/\gamma).
\end{equation}

The absence of such a polynomial bound implies that there exists a constant $ c > 0 $ such that for every polynomial $p$ and $\gamma > 0$,

\begin{equation}
    R_n(\mathcal{F}) \geq  c\gamma \sqrt{\frac{p(1/\gamma)}{n}}.
\end{equation}

The Rademacher complexity is typically used to upperbound the generalization error, hence this bound demonstrates an obstruction to bounding the generalization error of the reservoir dynamics through standard arguments, and specifically implies that there is a function $f$ describing an eigentask of any reservoir, such that for all $\gamma > 0$, polynomials $p$ and a constant $c > 0$,
\begin{equation}
\mathbb{E}[L(f)] - \frac{1}{n} \sum_{i=1}^{n} L(f(x_i)) \geq  c\gamma \sqrt{\frac{p(1/\gamma)}{n}},
\end{equation} 
for the loss function $L(f(x),y)=1$ if $(f(x)\neq y)$, and $0$ otherwise.

\section{Conclusion}
In this paper we have considered analog neural networks. We have argued that the only possible analog parameters in a digital model are the probabilities of the bitstrings and consequently we have considered stochastic, recurrent neural networks. A particularly simple example are reservoir computers, recurrent networks where the only trainable parameters are an output linear layer, which admit a simple formula for their IPC, which allows us to quantify the degradation to their performance due to the stochasticity. Specifically, by further considering the subset of these recurrent networks which we have called physical - those physically relevant in polynomial time and power - we have used a result from quantum complexity theory to demonstrate that this degradation can be meaningfully quantified as exponential. Further, borrowing results from classical learning theory, we then showed that this degradation in IPC implies that the functions that describe the reservoir's dynamics are difficult to learn, due to the natural noise causing confusion between the functions for any learner. 

Consequently, we were able to demonstrate they have a large fat-shattering dimension, and hence a large Rademacher complexity, formalizing the notion that these functions are susceptible to overfitting to noise. Because Rademacher complexity measures the ability of a set of functions to fit to random noise, it also provides a measure of overfitting to training data. Finally, this lets us lower bound the generalization error of learning with the reservoir. That is, we were able to show that any stochastic, physical reservoir computer necessarily has associated with it only a polynomially large amount of learning capabilities, even when it is allowed to access an exponentially large amount of post-processing. While it is well know that the information processing capacity is limited by the number of output signals this result strengthens the statement - even with an exponentially large number of output signals, any analog physical system subject to physicality constraints is limited to a polynomially small useful amount of learning. 

These results, in total, characterize a number of properties and behaviors of physical stochastic reservoir computers. Due to the abundance of stochasticity arising from sources such as friction, thermal and statistical uncertainty, and ultimately unavoidable shot noise due to quantum mechanics \cite{hu2023tackling, hu2023overcoming}, stochastic reservoir computers are a natural computational paradigm. Our first result, \Cref{theorem:generateddesign}, demonstrated that given a physical stochastic reservoir computer with $O(\poly(n))$ memory, the number of useful features (product signals) that can be produced from the $n$ output signals is $O(\poly(n))$.

By connecting ideas from learning with dynamical systems to concepts in statistical learning theory, we have also found that the fat-shattering dimension of the functions represented by reservoir dynamics is superpolynomial in the inverse of the fat-shattering width $\gamma$. Intuitively what we have shown is that, because reservoirs have a large number of low SNR eigentasks, and because no learning algorithm can be expected to do better than a reservoir at its eigentasks while being subjected to the same noise as the reservoir, the class of functions represented by the reservoir is itself challenging to learn. 

Surprisingly, this informs us about the growth of the fat-shattering dimension of the model at small scales ($\gamma \rightarrow 0$), whereas considerations from the dynamics in \Cref{theorem:nopoly} immediately rule out a collection of ``switching'' signals which instead seems to suggest a restriction on the growth of the function class at large scales ($1/\gamma\rightarrow 0$). In total, this work places large restrictions on the practical utility of large analog reservoir computers. While we have commented on the utility of small analog reservoir computers in the non-asymptotic regime, we leave the problem of optimizing the design of these devices subject to the constraints found in this paper open to future work. Specifically, this work highlights that while the exponentially large latent space of an analog, physical, stochastic reservoir computer is fundamentally inaccessible in the limit of long time, this still leaves at least a polynomially large space and potentially an exponentially large space in the limit of short times. Indeed, it is generally understood that analog computation is susceptible to confusion of noise with signal and the inability to separate the two - we have made this explicit in the setting of physical, recurrent, analog computation.

\section{Acknowledgments}
This material is based upon work supported by the U.S. Department of Energy, Office of Science, National Quantum Information Science Research Centers, Quantum Systems Accelerator (QSA). AMP thanks André Melo, Alexander Papageorge, Eric Peterson, C. Jess Riedel, Graeme Smith, Michael Walter and Reuben R. W. Wang for valuable feedback on the manuscript, and Nikolas Tezak for inspiring this line of work.

\bibliographystyle{IEEEtran}
\bibliography{main}

\appendix
\subsection{Physical Reservoirs and Proof of \Cref{lemma:prob_changes}}\label{app:physical}
The idea of physical or physically accessible circuits have been discussed in the literature (e.g. \cite{Shannon1949, poulin2011quantum}), however we are not aware of references that prove that general stochastic digital computers can only have polynomial changes in the probabilities. We thus include a proof of \Cref{lemma:prob_changes} here. \cite{coles2023thermodynamic} introduces the superoperator representation of Markovian dynamics. Markov chains, which describe the dynamics of our reservoir computer, have generators. Markov chains are equivalent descriptions of probabilistic circuits.
\begin{proof}
\cite{poulin2011quantum} establishes that arbitrary time-dependent quantum dynamics are no stronger (and clearly as strong as, by taking logarithms of the gates and applying them as Hamiltonians) the quantum circuit model. For this work, we consider reservoirs that are generated by stochastic circuit elements to avoid the details of the particular dynamics of the system. (This is simpler in the case of quantum dynamics due to the governing dynamics being given by the Schr\"odinger equation.) Physical circuits only have polynomial depth, and hence can only generate polynomial changes in values of s-modes \cite{coles2023thermodynamic} by the arguments in \cite{poulin2011quantum}.
\end{proof}

\end{document}